\newtheorem{example}{Example}
\theoremstyle{plain}
\theoremstyle{definition}
\newtheorem{definition}{Definition}
\theoremstyle{remark}
\newcommand{\Floor}[1]{\lfloor{#1}\rfloor}
\newcommand{\Ceil}[1]{\lceil{#1}\rceil}
\newcommand{\Angle}[1]{\langle #1 \rangle}
\newcommand{\expectation}[2]{\ensuremath{\mathbb{E}_{#1}[#2]}}
\newcommand{\Expectation}[2]{\ensuremath{\mathbb{E}_{#1}\left[#2\right]}}
\newcommand{\lequ}{\leftrightarrow}
\newcommand{\D}{\mathcal{D}}
\newcommand{\OBDD}[1]{OBDD$_{#1}$}
\newcommand{\CCDD}{CCDD}
\newcommand{\NNF}{\ensuremath{\mathsf{NNF}}}
\newcommand{\DNNF}{\ensuremath{\mathsf{DNNF}}}
\newcommand{\dDNNF}{\ensuremath{\mathsf{d\text{-}DNNF}}}
\newcommand{\DecDNNF}{Decision-\ensuremath{\mathsf{DNNF}}}
\newcommand{\SDD}[1]{\ensuremath{\text{SDD}_{#1}}}
\newcommand{\Ctwo}{\ensuremath{\mathsf{CCDD}}}
\newcommand{\dfour}{D4}
\newcommand{\PartialKC}{\ensuremath{\mathsf{PartialKC}}}
\newcommand{\MargProb}{\textsc{MargProb}}
\newcommand{\ConstructCore}{\textsc{ConstructCore}}
\newcommand{\MicroKC}{\textsc{MicroKC}}
\newcommand{\ProjectedKC}{\textsc{ProjectedKC}}
\newcommand{\DetectLitEqu}{\textsc{DetectLitEqu}}
\newcommand{\Decompose}{\textsc{Decompose}}
\newcommand{\PickGoodVar}{\textsc{PickGoodVar}}
\newcommand{\Kernelizable}{\textsc{ShouldKernelize}}
\newcommand{\Vars}{\ensuremath{\mathit{Vars}}}
\newcommand{\Cache}{\ensuremath{\mathit{Cache}}}
\newcommand{\EasyInstance}{\textsc{EasyInstance}}
\title{Fast Converging Anytime Model Counting\protect\thanks{The author list has been sorted alphabetically by last name; this should not be used to determine the extent of authors' contributions.} }
\author{
        Yong Lai,\textsuperscript{\rm 1}
        Kuldeep S. Meel,\textsuperscript{\rm 2}
        Roland H. C. Yap\textsuperscript{\rm 2}\\
}
\begin{document}

\maketitle

\begin{abstract}
Model counting is a fundamental problem which has been influential in many applications, from artificial intelligence to formal verification.
Due to the intrinsic hardness of model counting, 
approximate techniques have been developed to solve
real-world instances of model counting.
This paper designs a new anytime approach called \PartialKC{} for approximate model counting.
The idea is a form of partial knowledge compilation to provide an unbiased estimate of the model count which can converge to the exact count.
Our empirical analysis demonstrates that {\PartialKC} achieves significant scalability and accuracy over prior state-of-the-art approximate counters, including satss and STS.
Interestingly, the empirical results show that \PartialKC{} reaches convergence for many instances and therefore provides exact model counting performance comparable to state-of-the-art exact counters.
\end{abstract}

\section{Introduction}\label{sec:intro}

Given a propositional formula $\varphi$, the model counting problem (\#SAT) is to compute the number of satisfying assignments of $\varphi$. 
Model counting is a fundamental problem in computer science which has
a wide variety of applications in practice, ranging from 
probabilistic inference~\cite{Roth:96,Chavira:Darwiche:08}, probabilistic databases~\cite{VdB:Suciu:17}, probabilistic programming~\cite{Fierens:etal:15}, neural network verification~\cite{Baluta:etal:19}, network reliability~\cite{Duenas-Osorio:etal:17}, computational biology~\cite{Sashittal:El-Kebir:19}, and the like.
The applications benefit significantly from efficient propositional model counters.

In his seminal work, Valiant~\shortcite{Valiant:79} showed that model counting is \#P-complete, where \#P is the set of counting problems associated with NP decision problems. 
Theoretical investigations of \#P have led to the discovery of strong evidence for its hardness.
In particular, Toda~\shortcite{Toda:89} showed that PH $\subseteq P^{\#P}$; that is, each problem in the polynomial hierarchy could be solved by just one call to a \#P oracle.
Although there has been large improvements in the scalability of practical exact model counters, the issue of hardness is intrinsic.
As a result, researchers have studied approximate techniques to solve
real-world instances of model counting.


The current state-of-the-art approximate counting techniques can be categorized into three classes based on the guarantees over estimates~\cite{ApproxMC}. 
Let $\varphi$ be a formula with $Z$ models.
A counter in the first class is parameterized by $(\varepsilon, \delta)$, and computes a model count of $\varphi$ that lies in the interval $[(1 + \varepsilon)^{-1}Z, (1 + \varepsilon)Z]$ with confidence at least $1 - \delta$.
ApproxMC \cite{ApproxMC,ApproxMC2} is a well-known counter in the first class. 
A counter in the second class is parameterized by $\delta$, and computes a lower (or upper) bound of $Z$ with confidence at least $1 - \delta$
including tools such as
MBound \cite{Gomes:etal:06} and SampleCount \cite{Gomes:etal:07}.
Counters in the third class provide weaker guarantees, but offer relatively accurate approximations in practice and 
state-of-the-art counters include satss \cite{Gogate:Dechter:11} and STS \cite{STS}.
We remark that some counters in the third class can be converted into a counter in the second class. Despite significant efforts in the development of approximate counters over the past decade, scalability remains  a major challenge. 

In this paper, we focus on the third class of counters to achieve scalability. To this end, it is worth remarking that  a well-known problem for this class of approximate model counters is slow convergence. Indeed, in our experiments, we found satss and STS do not
converge in more than one hour of CPU time for many instances, whereas an exact model counter can solve those instances in several minutes of CPU time.
We seek to remedy this.
Firstly, we notice that each sample generated by a model counter in the third class represents a set of models of the original CNF formula.
Secondly, we make use of knowledge compilation techniques to accelerate the convergences.
Since knowledge compilation languages can often be seen as a compact representation of the model set of the original formula; we infer the convergence of the approximate counting by observing whether a full compilation is reached.
By storing the existing samples into a compiled form, we can also speed up the subsequent sampling by querying the current stored representation. 

We generalize a recently proposed language called \Ctwo{} \cite{Lai:etal:21}, which was shown to support efficient exact model counting, to represent the existing samples. 
The new representation, partial \Ctwo{}, adds two new types of nodes called known or unknown respectfully representing, (i) a sub-formula whose model count is known; and (ii) a sub-formula which is not explored yet.
We present an algorithm, \PartialKC{}, to generate a random partial CCDD that provides an unbiased estimate of the true model count.
\PartialKC{} has two desirable properties for an approximate counter: 
(i) it is an anytime algorithm;
(ii) it can eventually converge to the exact count.
Our empirical analysis demonstrates that {\PartialKC} achieves significant scalability as well as accuracy over prior state of the art approximate counters, including satss and STS. 

The rest of the paper is organized as follows:  We present notations, preliminaries, and related work in  Sections \ref{sec:prelims}--\ref{sec:related}. 
We introduce partial CCDD in Section~\ref{sec:PCCDD}. In Section~\ref{sec:PartialKC}, we present the model counter, \PartialKC{}. 
Section~\ref{sec:experiments} gives detailed experiments, and 
Section~\ref{sec:conclusion} concludes.

\section{Notations and Background}\label{sec:prelims}

In a formula or the representations discussed,
$x$ denotes a propositional variable, and
literal $l$ is a variable $x$ or its negation $\neg x$, where $var(l)$
denotes the variable.
$\mathit{PV} = \{x_0, x_1, \ldots, x_n, \ldots\}$ denotes a
set of propositional variables. 
A formula is constructed from constants $\mathit{true}$, $\mathit{false}$, and propositional variables using the negation ($\lnot$), conjunction ($\land$), disjunction ($\lor$), and equality ($\lequ$) operators.
A clause $C$ is a set of literals representing their disjunction.
A formula in conjunctive normal form (CNF) is a set of clauses representing their conjunction.  
Given a formula $\varphi$, a variable $x$, and a constant $b$, a substitution $\varphi[x \mapsto b]$ is a transformed formula by replacing $x$ with $b$ in $\varphi$.
An assignment $\omega$ over a variable set $X$ is a mapping from $X$ to $\{true, \mathit{false}\}$. 
The set of all assignments over $X$ is denoted by $2^X$. 
A model of $\varphi$ is an assignment over $\Vars(\varphi)$ that satisfies $\varphi$; that is, the substitution of $\varphi$ on the model equals $\mathit{true}$. Given a formula $\varphi$, we use $Z(\varphi)$ to denote the number of models, and the problem of model counting is to compute $Z(\varphi)$.

\paragraph*{Sampling-Based Approximate Model Counting}

Due to the hardness of exact model counting, sampling is a useful technique to estimate the count of a given formula.
Since it is often hard to directly sample from the distribution over the model set of the given formula, we can use importance sampling to estimate the model count \cite{Gogate:Dechter:11,Gogate:Dechter:12}.
The main idea of importance sampling is to generate samples from another easy-to-simulate distribution $Q$ called the proposal distribution.
Let $Q$ be a proposal distribution over $\Vars(\varphi)$ satisfying that $Q(\omega) > 0$ for each model $\omega$ of $\varphi$.
Assume that 0 divided by 0 equals 0, and therefore $Z(\varphi) = \Expectation{Q}{\frac{Z(\varphi|_{\omega})}{Q(\omega)}}$.
For a set of samples $\omega_1, \ldots, \omega_N$, $\widehat{Z_N} = \frac{1}{N}\sum_{i = 1}^{N}\frac{Z(\varphi|_{\omega_i})}{Q(\omega_i)}$ is an \emph{unbiased} estimator of $Z(\varphi)$; that is, $\expectation{Q}{\widehat{Z_N}} = Z(\varphi)$.
Similarly, a function $\widetilde{Z_N}$ is an \emph{asymptotically} unbiased estimator of $Z(\varphi)$ if $\lim_{N \rightarrow \infty}\expectation{Q}{\widetilde{Z_N}} = Z(\varphi)$.
It is obvious that each unbiased estimator is asymptotically unbiased.

\paragraph*{Knowledge Compilation}

In this work, we will concern ourselves with the subsets of Negation Normal Form (\NNF{}) wherein the internal nodes are labeled with  conjunction ($\wedge$) or disjunction ($\vee$) while the leaf nodes are labeled with  $\bot$ ($\mathit{false}$), $\top$ ($\mathit{true}$), 
or a literal.
For a node $v$, we use $sym(v)$ to denote the labeled symbol, and $Ch(v)$ to denote the set of its children. We also use $\vartheta(v)$ and $\Vars(v)$ denote the formula represented by the DAG rooted at $v$, and the variables that label the descendants of $v$, respectively.  
We define the well-known decomposed conjunction~\cite{Darwiche:Marquis:02}
as follows:
\begin{definition}
	A conjunction node $v$ is called a \emph{decomposed conjunction} if its children (also known as conjuncts of $v$) do not share variables. That is, for each pair of  children $w$ and $w'$ of $v$, we have  $\Vars(w) \cap \Vars(w') = \emptyset$.
\end{definition}

If each conjunction node is decomposed, we say the formula is in \emph{Decomposable} \NNF{} (\DNNF{})~\cite{Darwiche:01a}. 
\DNNF{} does not support tractable model counting, but the following subset does:

\begin{definition}
	A disjunction node $v$ is called \emph{deterministic} if each two disjuncts of $v$ are logically contradictory. That is, any two different children $w$ and $w'$ of $v$ satisfy that $\vartheta(w) \land \vartheta(w') \models false$. 
\end{definition}

If each disjunction node of a \DNNF{} formula is deterministic, we say the formula is in deterministic \DNNF{} (\dDNNF{}). \emph{Binary decision} is a practical property to impose determinism in the design of a compiler (see e.g., \dfour{} \cite{D4}). 
Essentially, each decision node with one variable $x$ and two children is equivalent to a disjunction node of the form $(\lnot x \land \varphi) \lor (x \land \psi)$, where $\varphi$, $\psi$ represent the formulas corresponding to the children.
If each disjunction node is a decision node, the formula is in \DecDNNF{}. Each \DecDNNF{} formula satisfies the read-once property: each decision variable appears at most once on a path from the root to a leaf.

Recently, a new type of conjunctive nodes called \emph{kernelized} was introduced \cite{Lai:etal:21}. Given two literals $l$ and $l'$, we use $l \lequ l'$ to denote literal equivalence of $l$ and $l'$.  
Given a set of literal equivalences $E$, let $E' = \{l \lequ l', \lnot l \lequ \lnot l'\mid l \lequ l' \in E\}$; and then we define semantic closure of $E$,  denoted by $\Ceil{E}$, as the equivalence closure of $E'$. Now for every literal $l$ under $\Ceil{E}$, let $[l]$ denote the equivalence class of $l$. Given $E$, a unique equivalent representation of $E$, denoted by $\Floor{E}$ and called {\em prime literal equivalences}, is defined as follows: \\ 
\centerline{
	$\displaystyle
	\Floor{E} =   \bigcup\limits_{x \in \mathit{PV}, \min_\prec[x] = x}^{} \{ x \lequ l \mid l \in [x], l \neq x \} $
} \\
where $\min_\prec[x]$ is the minimum variable appearing in $[x]$ over the lexicographic order $\prec$.

\begin{definition}
	A {\em kernelized conjunction node} $v$ is a conjunction node consisting of a
	distinguished child, we call the {\em core} child, denoted by $ch_{\mathit{core}}(v)$,  and a set of remaining children which define equivalences, denoted by $Ch_{rem}(v)$,  such that:
	
	\begin{enumerate}[(a),topsep=0mm,parsep=1mm]
		\item Every $w_i \in Ch_{rem}(v)$ describes a literal equivalence, i.e., $w_i = \Angle{x \lequ l}$ and the union of $\vartheta(w_i)$, denoted by $E_v$, represents a set of prime literal equivalences.
		\item For each literal equivalence $x \lequ l \in E_v$, $var(l) \notin \Vars(ch_{\mathit{core}}(v))$.
	\end{enumerate}
\end{definition}

We use  $\land_d$ and $\land_k$ to denote decomposed and kernelized conjunctions respectively. 
A \emph{constrained Conjunction \& Decision Diagram} (CCDD) consists of decision nodes, conjunction nodes, and leaf nodes where each decision variable appears at most once on each path from the root to a leaf, and each conjunction node is either decomposed or kernelized.
Figure \ref{fig:full-CCDD} depicts a CCDD. Lai et al. \shortcite{Lai:etal:21} showed that CCDD supports model counting in linear time.

\begin{figure}[ht]
	\centering
	\includegraphics[width = 0.25\textwidth]{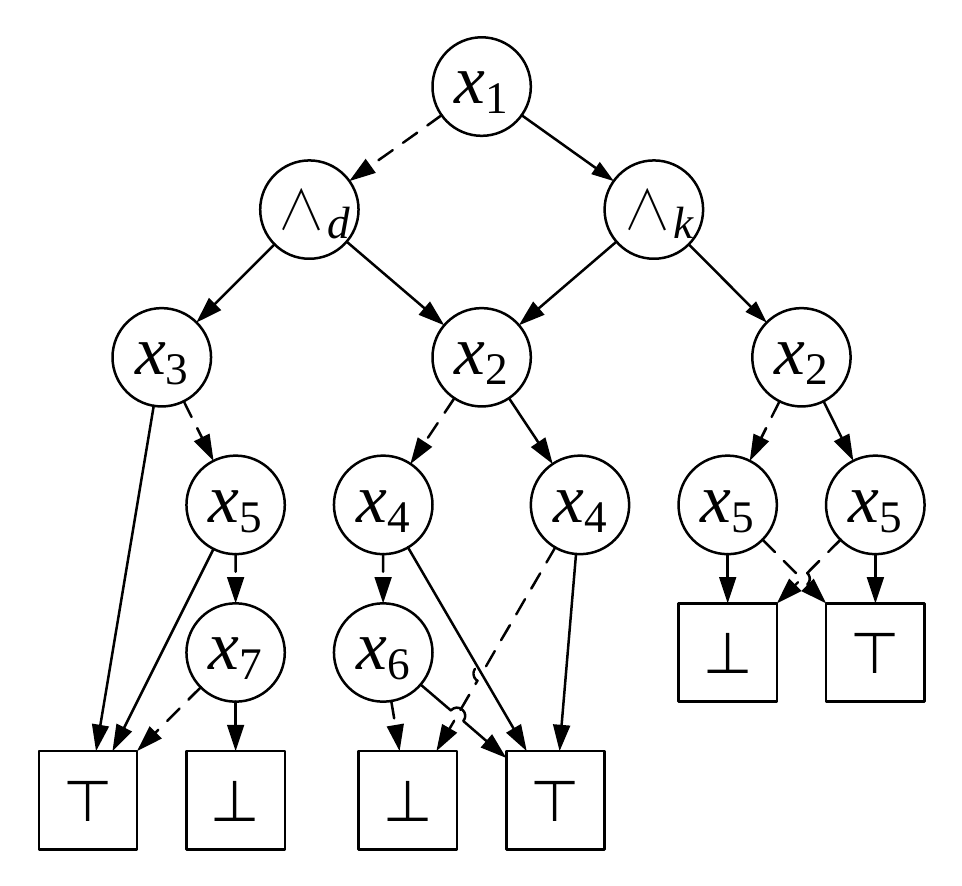}
	\caption{An illustrated CCDD}\label{fig:full-CCDD}
\end{figure}

\section{Related Work}
\label{sec:related}

The related work can be viewed along two directions:
\begin{inparaenum}[(1)]
	\item work related to importance sampling for graphical models; and
	\item work related to approximate compilation for propositional formula.
\end{inparaenum}
While this work focuses on anytime approximate model counting, 
we highlight a line of work in the first category, namely,
the design of efficient hashing-based approximate model counters that seek to provide 
long line of work in the design of efficient hashing-based approximate model counters that seek to provide 
$(\varepsilon,\delta)$-guarantees~\cite{Stockmeyer:83,Gomes:etal:06,ApproxMC,ApproxMC2,ApproxMC3,ApproxMC4}. 

The most related work in the first direction is SampleSearch~\cite{Gogate:Dechter:11,Gogate:Dechter:12}.
For many KC (knowledge compilation) languages, each model of a formula can be seen a particle of the corresponding compiled result.
SampleSearch used the generated models (i.e., samples) to construct an AND/OR sample graph, which can be used to estimate the model count of a CNF formula.
Each AND/OR sample graph can be treated as a partial compiled result in AOBDD (binary version of AND/OR Multi-Valued Decision Diagram \cite{Mateescu:etal:08}).
They showed that the estimate variance of the partial AOBDD is smaller than that of the mean of samples.
Our \PartialKC{} approach has three main differences from SampleSearch.
First, the SampleSearch approach envisages an independent generation of each sample, while the KC technologies used in \PartialKC{} can accelerate the sampling (and thus the convergence), which we experimentally verified.
Second, the decomposition used by the partial AOBDD in SampleSearch
is static, while the one in \PartialKC{} is dynamic.
Results from the KC literature generally suggest that dynamic decomposition is 
more effective than a static one \cite{Dsharp,D4}.
Finally, our KC approach allows to determine if convergence is reached;
but SampleSearch does not.

The related work in the second direction is referred to as approximate KC.
Given a propositional formula and a range of weighting functions, Venturini and Provan \cite{Venturini:Provan:08} proposed two incremental approximate algorithms respectively for prime implicants and DNNF, which selectively compile all solutions exceeding a particular threshold.
Their empirical analysis showed that these algorithms enable space reductions of several orders-of-magnitude over the full compilation.
Intrinsically, partial KC is a type of approximate KC that still admits exactly reasoning to some extent (see Proposition \ref{prop:exact-bound}). 
The output of \PartialKC{} can be used to compute an unbiased estimate of model count, while the approximate \DNNF{} compilation algorithm from Venturini and Provan can only compute a lower bound of the model count.
Some bottom-up compilation algorithms of \OBDD{} \cite{Bryant:86} and \SDD{} \cite{Darwiche:11} also perform compilation in an incremental fashion by using the operator APPLY. However, the OBDD and SDD packages \cite{CUDD,SDD-package} do not overcome the size explosion problem of full KC, because the sizes of intermediate results can be significantly larger than the final compilation result \cite{Huang:Darwiche:04}. Thus, Friedman and Van den Broeck \shortcite{Friedman:VdB:18} proposed an approximate inference algorithm, collapsed compilation, for discrete probabilistic graphical models. 
The differences between \PartialKC{} and collapsed compilation are as follows:
(i) collapsed compilation works in a bottom-up fashion while \PartialKC{} works top-down; (ii)
collapsed compilation is asymptotically unbiased while \PartialKC{} is unbiased;
and (iii) 
collapsed compilation does not support model counting so far.

\section{Partial CCDD}
\label{sec:PCCDD}

In this section, we will define a new representation called \emph{partial} CCDD, used for approximate model counting. For convenience, we call the standard CCDD \emph{full}.

\begin{definition}[Partial CCDD]\label{def:Partial-CCDD}
	Partial CCDD is a generalization of full CCDD, adding two new types of leaf vertices labeled with `$?$' or a number, which are the {\em unknown} and 
{\em known} nodes, respectively. Each arc from a decision node $v$ is labeled by a pair $\Angle{p_b(v), f_b(v)}$ of estimated marginal probability and visit frequency, where $b = 0$ (resp. 1) means the arc is dashed (resp. solid). For a decision node $v$, $p_0(v) + p_1(v) = 1$; $p_c(v) = 0$ iff $sym(ch_b(v)) = \bot$; and $f_b(v) = 0$ iff $sym(ch_b(v)) = ?$.
\end{definition}

Hereafter we use $\Angle{?}$ to denote an unknown node.
For convenience, we require that each conjunctive node cannot have any unknown child.
For simplicity, we sometimes use $f(w)$ and $p(w)$ to denote $\Angle{f_0(w), f_1(w)}$ and $\Angle{p_0(w), p_1(w)}$, respectively, for each decision node $w$.
For a partial CCDD node $v$, we denote the DAG rooted at $v$ by $\D_v$.
We now establish a part-whole relationship between partial and full CCDDs:

\begin{definition}\label{def:part}
	Let $u$ and $u'$ be partial and full CCDD nodes, respectively, from the same formula. $\D_u$ is a \emph{part} of $\D_{u'}$ iff $u$ is an unknown node, or the following conditions hold:
	\begin{compactenum}[(a)]
		\item If $u' = \Angle{\bot}$ (resp. $\Angle{\top}$), then $u = \Angle{\bot}$ (resp. $\Angle{\top}$);
		\item If $u'$ is a known node, then the model count of $u$ equals the number labeled on $u'$;
		\item If $u' = \Angle{x, ch_0(u), ch_1(u)}$, then $sym(u) = x$, and the partial CCDD rooted at $ch_0(u)$ and $ch_1(u)$ are parts of the full CCDD rooted at $ch_0(u')$ and $ch_1(u')$, respectively;
		\item If $u' = \Angle{\land_d, Ch(u')}$, then $sym(u) = \land_d$, $|Ch(u)| = |Ch(u')|$, and each partial CCDD rooted at some child of $u$ is exactly a part of one full CCDD rooted at some child of $u'$; and
		\item If $u' = \Angle{\land_k, ch_{core}(u'), Ch_{rem}(u')}$, then $sym(u) = \land_k$, $Ch_{rem}(u) = Ch_{rem}(u')$, and the partial CCDD rooted at $ch_{core}(u')$ is exactly a part of the full CCDD rooted at $ch_{core}(u)$.
	\end{compactenum}
\end{definition}

Figure \ref{fig:Partial-CCDD} shows two different partial CCDDs from
the full CCDD in Figure \ref{fig:full-CCDD} which can be generated by
\MicroKC{} given later in Algorithm \ref{alg:MicroKC}.
Given a partial CCDD rooted at $u$ that is a part of full CCDD rooted at $u'$, the above definition establishes a mapping from the nodes of $\D_u$ to those of $\D_{u'}$.

\begin{figure}[ht]
	\centering
	\subfloat[]{\label{fig:Partial-CCDD:a}
		\centering
		\begin{minipage}[c]{0.42\linewidth}
			\centering
			\includegraphics[width = \textwidth]{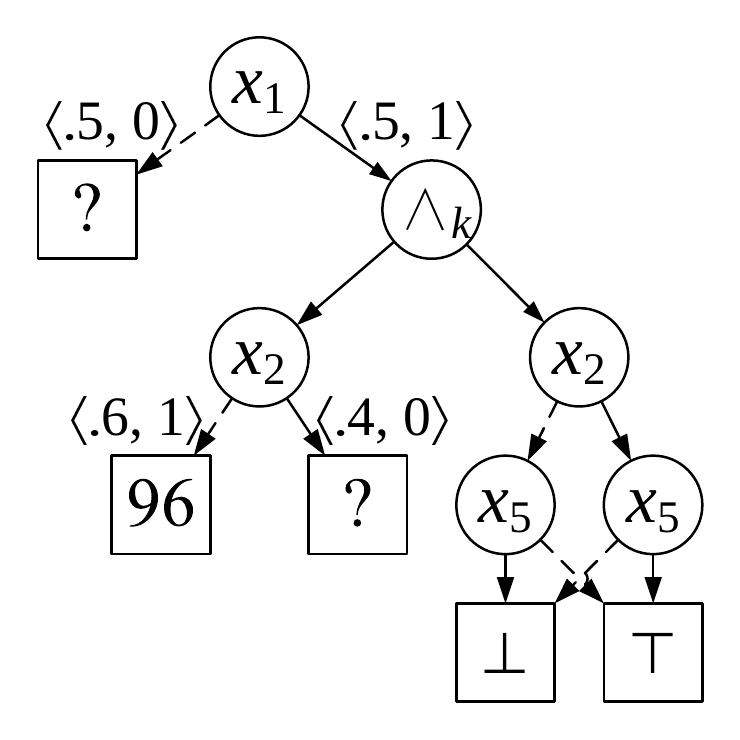}
		\end{minipage}
	}
	\subfloat[]{\label{fig:Partial-CCDD:b}
		\centering
		\begin{minipage}[c]{0.53\linewidth}
			\centering
			\includegraphics[width = \textwidth]{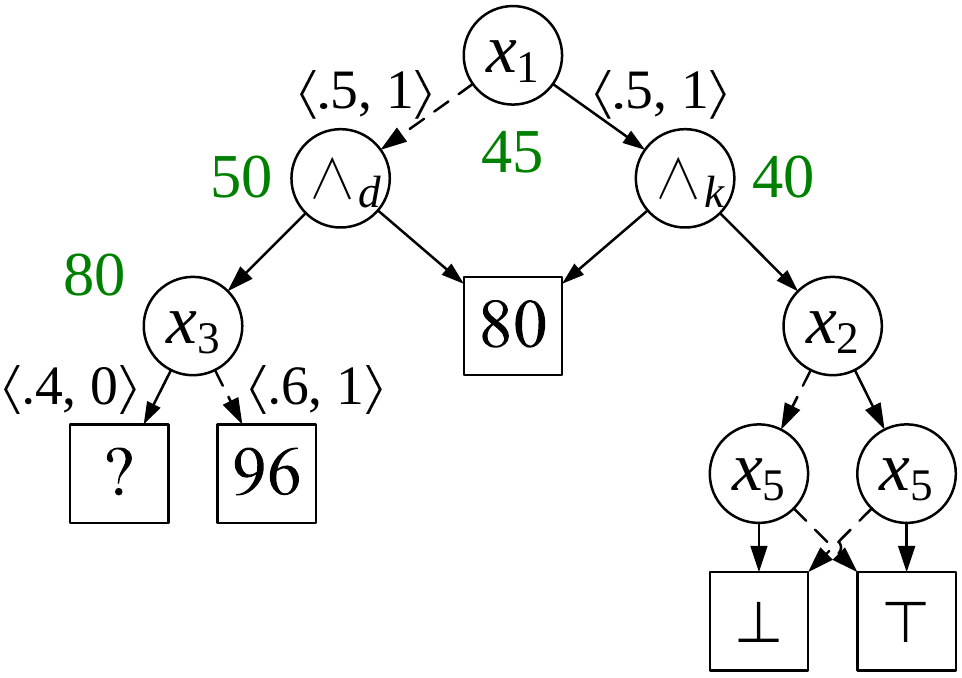}
		\end{minipage}
	}
	\caption{two partial CCDDs}\label{fig:Partial-CCDD}
\end{figure}

A full CCDD can be seen as a compact representation for the model set corresponding to the original knowledge base.
We can use a part of the full CCDD to estimate its model count.
Firstly, a partial CCDD can be used to compute deterministic lower and upper bounds of the model count, respectively:

\begin{restatable}{proposition}{PropBound}\label{prop:exact-bound}
	Let $u$ and $u'$ be, respectively, a partial CCDD node and a full CCDD node over $X$ such that $\D_u$ is a part of $\D_{u'}$. 
	For each unknown node $v$ in $\D_u$ corresponding to $v'$ in $\D_{u'}$ under the part-whole mapping, we assume that we have computed an estimate $\widetilde{Z}(v)$ that is a lower (resp. upper) bound $Z(v')$. A lower (resp. upper) bound of $Z(u')$ can be recursively computed in linear time:
	\begin{equation}\label{eq:bound}
	\widetilde{Z}(u) = \begin{cases}
	0 & sym(u) = \bot \\
	2^{|X|} & sym(u) = \top \\
	sym(u) & sym(u) \in \mathbb{N}  \\
	c^{-1} \cdot \prod_{v \in Ch(u)}\widetilde{Z}(v) & sym(u) = \land_d  \\
	\dfrac{\widetilde{Z}(ch_{\mathit{core}}(u))}{2^{|Ch(u)| - 1}} & sym(u) = \land_k  \\
	\dfrac{\widetilde{Z}(ch_0(u)) + \widetilde{Z}(ch_1(u))}{2} & {sym(u) \in X}
	\end{cases}
	\end{equation}
	where $c = 2^{(|Ch(u)| - 1) \cdot |X|}$.
\end{restatable}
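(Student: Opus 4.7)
The plan is to proceed by structural induction on the partial CCDD rooted at $u$, treating the lower-bound and upper-bound cases in parallel. The key observation is that the recursive formula in Equation (\ref{eq:bound}) has exactly the same algebraic shape as the exact model-counting formula for a full CCDD (this is essentially the linear-time counting result of Lai et al.~\shortcite{Lai:etal:21}): if we applied the recursion to a fully expanded partial CCDD with no unknown leaves, it would produce $Z(u')$ exactly. Consequently the proposition reduces to a monotonicity claim about replacing the \emph{true} value at each unknown node $v$ by an over-/under-approximation $\widetilde{Z}(v)$.

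First I would handle the base cases. For $\bot$ and $\top$, Definition~\ref{def:part}(a) forces $u' = u$, so $\widetilde{Z}(u) = Z(u')$ holds with equality. For a known leaf, Definition~\ref{def:part}(b) directly gives $\widetilde{Z}(u) = Z(u')$. For an unknown leaf, the bound is given by the hypothesis of the proposition.

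For the inductive step I would check each case of Definition~\ref{def:part} separately and verify two things: that the closed-form for $Z(u')$ in the full CCDD matches the formula used to define $\widetilde{Z}(u)$, and that this formula is monotone nondecreasing in each child argument. The decision case is immediate: $\vartheta(u') = (\neg x \wedge \vartheta(ch_0(u'))) \vee (x \wedge \vartheta(ch_1(u')))$, and by the read-once property $x \notin \Vars(ch_b(u'))$, so $Z(u') = \tfrac{1}{2}(Z(ch_0(u')) + Z(ch_1(u')))$. For the decomposed conjunction, one uses $\Vars(ch_i(u')) \cap \Vars(ch_j(u')) = \emptyset$ to factor the count of $\vartheta(u')$ over $X$; a short bookkeeping calculation accounting for the free variables outside $\bigcup_i \Vars(ch_i(u'))$ yields exactly the normalizing factor $c = 2^{(|Ch(u)|-1)\cdot |X|}$. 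For the kernelized conjunction, each of the $|Ch(u)|-1$ remaining children is a literal equivalence on a variable disjoint from $\Vars(ch_{core}(u'))$, so each equivalence independently halves the number of assignments over $X$ extending a model of the core; this accounts for the factor $2^{|Ch(u)|-1}$ in the denominator. In every case the expression is a product/sum with nonnegative coefficients of the $\widetilde{Z}$ values of children, so monotonicity transports the inductive bounds from the children up to $u$.

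The main obstacle I anticipate is the bookkeeping in the decomposed-conjunction case, because $Z$ is stated over the global variable set $X$ rather than over $\Vars(u')$, which forces one to keep careful track of how the free variables outside each child's scope contribute $2^{|X|-|\Vars(\cdot)|}$ factors; getting the exponent in $c$ to come out exactly right is where a slip is easiest. The kernelized case is conceptually similar but requires first arguing (via condition (b) of the definition of kernelized node) that the constraints imposed by the literal equivalences are genuinely independent of the core and of each other, so they act as multiplicatively independent halvings. Once these two arithmetic identities are in place, the inductive step is a one-line monotonicity argument and the proposition follows.
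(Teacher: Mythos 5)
Your proposal is correct and follows essentially the route the paper intends: structural induction over the part-whole mapping of Definition~\ref{def:part}, using the fact that each case of Eq.~\eqref{eq:bound} coincides with the exact model-count recurrences for full CCDD nodes established by Lai et al.~(2021), so that the bound propagates upward by monotonicity from the hypothesized bounds at unknown leaves. Your accounting of the normalizing factors ($c=2^{(|Ch(u)|-1)\cdot|X|}$ for decomposed conjunctions via the $2^{|X|-|\Vars(\cdot)|}$ padding of each child, and $2^{|Ch(u)|-1}$ for kernelized conjunctions via the independent halvings guaranteed by primality and condition (b)) matches the paper's formulas, so no gap remains.
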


We remark that we must compute lower or upper bound for each unknown node before applying Eq. \eqref{eq:bound}.
In practice, for example, we can compute lower and upper bounds of the model count of an unknown node as $0$ and $2 ^{|X|}$, respectively.
However, we mainly aim at computing an unbiased estimate of the model count.
We will use a randomly generated partial CCDD to compute an unbiased estimate of the model count of the corresponding full CCDD.
The main difference between the new computation and the one in Eq. \eqref{eq:bound} is at the estimation on decision nodes.
Given a randomly generated partial CCDD rooted at $u$, the new estimate of the model count can be computed recursively in linear time:
 \begin{equation}
 	\widehat{Z}(u) = \begin{cases}\label{eq:unbiased}
 	0 & sym(u) = \bot \\
 	2^{|X|} & sym(u) = \top \\
 	sym(u) & sym(u) \in \mathbb{N}  \\
 	c^{-1} \cdot \prod_{v \in Ch(u)}\widehat{Z}(v) & sym(u) = \land_d  \\
 	\dfrac{\widehat{Z}(ch_{\mathit{core}}(u))}{2^{|Ch(u)| - 1}} & sym(u) = \land_k  \\
 	\begin{gathered}\dfrac{\widehat{Z}(ch_0(u)) \cdot f_0(u)}{2p_0(u) \cdot (f_0(u) + f_1(u))} +  \\
 	\dfrac{\widehat{Z}(ch_1(u))\cdot f_1(u)}{2p_1(u) \cdot (f_0(u) + f_1(u))}
 	\end{gathered} & sym(u) \in X
 	\end{cases}
 \end{equation}
where $c = 2^{(|Ch(u)| - 1) \cdot |X|}$. We remark that for each decision node $u$ with one unknown child $v$, the visit frequency $f_b(u)$ on the edge from $u$ to $v$ is 0. Thus, $\widehat{Z}(v) \cdot f_b(u)$ always equals zero in the decision case of Eq. \eqref{eq:unbiased}. 

\begin{example}
Consider the partial CCDD in Figure \ref{fig:Partial-CCDD:b}.
We denote the root by $u$ and the decision child of $ch_0(u)$ by $v$.
\[
\begin{gathered}
\widehat{Z}(v) = \frac{? \times 0}{2 \times 0.4 \times 1} + \frac{96  \times 1}{2 \times 0.6 \times 1} = 80; \\
\widehat{Z}(ch_0(u)) = 2^{ - 1 \times 7}  \times \widehat{Z}(v) \times 80  = 50;
\\
\widehat{Z}(ch_1(u)) = \frac{\widehat{Z}(ch_{core}(ch_0(u)))}{2^1} = 40;
\\
\widehat{Z}(u) = \frac{\widehat{Z}(ch_0(u))  \times 1}{2 \times 0.5 \times 2} + \frac{\widehat{Z}(ch_1(u))  \times 1}{2 \times 0.5 \times 2} = 45.
\end{gathered}
\]
\end{example}

\section{{\PartialKC}: An Anytime Model Counter}
\label{sec:PartialKC}

We aim to estimate model counts for CNF formulas that cannot be solved within time and space limits for exact model counters.
Our approach is to directly generate a randomly partial CCDD formula from the CNF formula rather than from an equivalent full CCDD.
This is achieved with \PartialKC{} given in 
Algorithm \ref{alg:PartialKC}, which compiles a CNF formula into a partial CCDD.

\PartialKC{} calls \MicroKC{} in Algorithm \ref{alg:MicroKC} multiple times in a given timeout $t$.
We use a hash table called $\Cache$ to store the current compiled result implicitly.
Each call of \MicroKC{} updates $\Cache$, implicitly enlarging the current compiled result rooted at $\Cache(\varphi)$. \PartialKC{} reaches convergence in line \ref{line:PartialKC:known} when the root of the resulting partial CCDD making the count exact.
In lines \ref{line:PartialKC:restart-begin}--\ref{line:PartialKC:restart-end}, we will restart the compilation if we encounter memory limits.
Thus, \PartialKC{} is an anytime algorithm.
We assume that the execution of PartialKC is memory-out $k$ times, so PartialKC will generate $k+1$ Partial CCDDs with roots $v_0, \ldots, v_k$. Let $\widehat{Z}_0, \ldots, \widehat{Z}_k$ be the counts marked on the roots.
We also assume that we call MicroKC $N_i$ times in PartialKC for generating the Partial CCDD rooted at $v_i$. Then $\frac{(N_0 \times Z_0) + \cdots + (N_k \times Z_k)}{N_0 + \cdots + N_k}$ is a proper estimate of the true count. Then in line 7 in Algorithm \ref{alg:PartialKC}, $N = N_0 + \cdots + N_i, M = N_0 + \cdots + N_{i-1}$, and $\widehat{Z} = \frac{(N_0 \times \widehat{Z}_0) + \cdots + (N_k \times \widehat{Z}_{k-1})}{M}$. After line 8, $\widehat{Z} = \frac{(N_0 \times Z_0) + \cdots + (N_k \times \widehat{Z}_k)}{N}$.

\begin{algorithm}[!htb]
	\small
	\caption{\PartialKC($\varphi$, $t$)} \label{alg:PartialKC}
	$N \leftarrow M \leftarrow Z \leftarrow 0$\;
	\While {running time does not exceed to $t$}{
		$N \leftarrow N+1$\;
		$\MicroKC(\varphi)$\;
		\lIf {$\Cache(\varphi)$ is known} {\KwRet $sym(\Cache(\varphi))$}\label{line:PartialKC:known}
		\If {exceeding to memory} {\label{line:PartialKC:restart-begin}
			$v \leftarrow \Cache(\varphi)$\;
			$Z = \frac{M \cdot Z + (N-M) \cdot \widehat{Z}(v)}{N}$\;
			$M \leftarrow N$\;
			Clear \Cache{}\;
		}\label{line:PartialKC:restart-end}
	}
	$v \leftarrow \Cache(\varphi)$\;
	\KwRet $\frac{M \cdot Z + (N-M) \cdot \widehat{Z}(v)}{N}$
\end{algorithm}

We estimate the hardness of the input CNF formula in line \ref{line:MicroKC:easy}, and if it is easy, we will obtain a known node by calling an exact model counter,  ExactMC \cite{Lai:etal:21} which uses a full \CCDD.
In lines \ref{line:MicroKC:initial-begin}--\ref{line:MicroKC:initial-end}, we deal with the case of the initial call of \MicroKC{} on $\varphi$.
We try to kernelize in lines \ref{line:MicroKC:initial-kernel-begin}--\ref{line:MicroKC:initial-kernel-end}.
Otherwise, we decompose $\varphi$ into a set of sub-formulas without shared variables in line \ref{line:MicroKC:decompose}.
In lines \ref{line:MicroKC:initial-decomposition-begin}--\ref{line:MicroKC:initial-decomposition-end}, we deal with the case where $\varphi$ is decomposable, and call \MicroKC{} recursively for each sub-formula.

Otherwise, we deal with the case where $\varphi$ is not decomposable in lines \ref{line:MicroKC:initial-decision-begin}--\ref{line:MicroKC:initial-decision-end}.
We introduce a decision node $u$ labeled with a variable $x$ from $\Vars(\varphi)$.
We estimate the marginal probability of $\varphi$ over $x$ in line \ref{line:MicroKC:MargProb} and sample a Boolean value $b$ with this probability in the next line.
We remark that the variance of our model counting method depends on the accuracy of the marginal probability estimate, discussed further in Section \ref{sec:imp}.
Then we generate the children of $u$, updating the probability and frequency.

We deal with the case of repeated calls of \MicroKC{} on $\varphi$ in lines \ref{line:MicroKC:repeat-begin}--\ref{line:MicroKC:repeat-end}.
The inverse function $\Cache^{-1}(v)$ of $\Cache$ is used for finding formula $\varphi$ such that $\Cache(\varphi) = v$.
\begin{algorithm}[!htb]
	\small
	\caption{\MicroKC($\varphi$)} \label{alg:MicroKC}
	\lIf {$\varphi = \mathit{false}$} {\KwRet $\Angle{\bot}$} \label{line:MicroKC:base-case-begin}
	\lIf {$\varphi = \mathit{true}$} {\KwRet $\Angle{\top}$} \label{line:MicroKC:base-case-end}
	\lIf {$\EasyInstance(\varphi)$} {\KwRet $\Angle{\text{ExactMC}(\varphi)}$}\label{line:MicroKC:easy}
	\If {$\Cache(\varphi) = nil$} {\label{line:MicroKC:initial-begin}
		\If {$\Kernelizable(\varphi)$} {\label{line:MicroKC:initial-kernel-begin}
		$E \leftarrow \DetectLitEqu(\varphi)$\;\label{line:Panini:liteq}
		\If {$|\Floor{E}| > 0$}{ 
			$\psi \leftarrow \ConstructCore(\varphi, \Floor{E})$\; \label{line:Panini:substitute}
			$v \leftarrow \MicroKC(\psi)$\; \label{line:MicroKC:kernel}
			$V \leftarrow \{\Angle{x \lequ l} \mid x \lequ l \in \Floor{E}\}$\;
			\KwRet $Cache(\varphi) \leftarrow \Angle{\land_k, \{v\} \cup V}$\;\label{line:MicroKC:kernelized-node}
		}
		}\label{line:MicroKC:initial-kernel-end}
		$\Psi \leftarrow \Decompose(\varphi)$\; \label{line:MicroKC:decompose}
		\uIf {$|\Psi| > 1$} {
			$V \leftarrow \{\MicroKC(\psi ) \mid \psi \in \Psi\}$\;\label{line:MicroKC:initial-decomposition-begin}
			\KwRet $\mathit{Cache}(\varphi) \leftarrow \Angle{\land_d, V}$\label{line:MicroKC:initial-decomposition-end}
		}
		\Else{ 
			$x \leftarrow \PickGoodVar(\varphi)$\; \label{line:MicroKC:initial-decision-begin}
			$p \leftarrow \MargProb(\varphi, x)$\;\label{line:MicroKC:MargProb}
			$b \sim \mathit{Bernoulli}(p)$\;\label{line:MicroKC:initial-sample}
			Create a decision node $u$ with $sym(u) = x$ \;
			$ch_b(u) \leftarrow \MicroKC(\varphi[x \mapsto b])$\;
			$ch_{1-b}(u) \leftarrow \Angle{?}$\;
			$p_{0}(u) \leftarrow 1-p$; $p_{1}(u) \leftarrow p$\;
			$f_{b}(u) \leftarrow 1$; $f_{1-b}(u) \leftarrow 0$\;
			\KwRet $Cache(\varphi) \leftarrow u$\label{line:MicroKC:initial-decision-end}
		}
	}\label{line:MicroKC:initial-end}
	$v \leftarrow \Cache(\varphi)$\;\label{line:MicroKC:repeat-begin}
	\uIf {$v$ has no unknown descendants} {
		Let $c$ be the model count of $v$\;\label{line:Micro:repeat-known-begin}
		\KwRet $Cache(\varphi) \leftarrow \Angle{c}$\;\label{line:Micro:repeat-known-end}
	}
 	\uElseIf {$v$ is kernelized} {\label{line:Panini:kernel-begin}
			$v' \leftarrow \MicroKC(\Cache^{-1}(ch_{core}(v)))$\; \label{line:Panini:kernel}
			\KwRet $Cache(\varphi) \leftarrow \Angle{\land_k, \{v'\} \cup Ch_{rem}(v)}$\;\label{line:Micro:kernelized-node}
	}\label{line:Panini:kernel-end}
	\uElseIf {$v$ is decomposed} {
		$V \leftarrow \{\MicroKC(\Cache^{-1}(w)): w \in Ch(v)\}$\;
		\KwRet $\mathit{Cache}(\varphi) \leftarrow \Angle{\land_d, V}$\label{line:Panini:decomposition}
	}
	\Else{ 
		$b \sim \mathit{Bernoulli}(p_1(v))$\;\label{line:MicroKC:repeat-sample}
		$f_b(v) \leftarrow f_b(v) + 1$\;
		$ch_b(v) \leftarrow \MicroKC(\varphi[sym(v) \mapsto b])$\;\label{line:MicroKC:decision-end}
	}\label{line:MicroKC:repeat-end}
\end{algorithm}

\begin{example}\label{exam:PartialKC}
	We run \PartialKC{} on the formula $\varphi = (x_1 \lor x_3 \lor x_5 \lor \lnot x_7 ) \land ( x_4 \lor x_6 ) \land (\lnot x_2  \lor x_4) \land (\lnot x_1 \lor \lnot x_2 \lor x_5 ) \land (\lnot x_1 \lor  x_2  \lor \lnot x_5 )$.
	For simplicity, we assume that \PickGoodVar{} chooses the variable with the smallest subscript and \EasyInstance{} returns true when the formula has less than three variables.
	For the first calling of \MicroKC($\varphi$), the condition in line \ref{line:MicroKC:initial-begin} is satisfied. 
	We assume that the marginal probability of $\varphi$ over $x_1$ is estimated as 0.5 and 1 is sampled in line \ref{line:MicroKC:initial-sample}.
	Then \MicroKC($\varphi_1$) is recursively called, where $\varphi_1 = ( x_4 \lor x_6 ) \land (\lnot x_2  \lor x_4) \land (\lnot x_2 \lor x_5 ) \land (x_2  \lor \lnot x_5)$.
	We kernelize $\varphi_1$ as $\varphi_2 = ( x_4 \lor x_6 ) \land (\lnot x_2  \lor x_4)$ and then invoke \MicroKC($\varphi_2$). 
	Similarly, the condition in \ref{line:MicroKC:initial-begin} is satisfied. 
	We assume that the estimated marginal probability of $\varphi_2$ over $x_2$ is 0.4 and 0 is sampled in line \ref{line:MicroKC:initial-sample}.
	Then \MicroKC($\varphi_{3}$) is recursively called, where $\varphi_{3} = x_4 \lor x_6$, and we call ExactMC($\varphi_{3}$) to get a count 96. 
	Finally, the partial CCDD in Figure \ref{fig:Partial-CCDD:a} is returned.
	For the second calling of \MicroKC($\varphi$), the condition in line \ref{line:MicroKC:initial-begin} is not satisfied. 
	We get the stored marginal probability of $\varphi$ over $x_1$ and assume that 0 is sampled in line \ref{line:MicroKC:repeat-sample}.
	Then \MicroKC{} is recursively called on $\varphi_4 = (x_3 \lor x_5 \lor \lnot x_7 ) \land ( x_4 \lor x_6 ) \land (\lnot x_2  \lor x_4)$, and then \MicroKC{} is recursively called on $\varphi_5 = x_3 \lor x_5 \lor \lnot x_7$ and $ \varphi_2 = ( x_4 \lor x_6 ) \land (\lnot x_2  \lor x_4)$.
	Finally, we generate the partial CCDD in Figure \ref{fig:Partial-CCDD:b}.
\end{example}

\begin{restatable}{proposition}{PropPartialKC}\label{prop:PartialKC}
	Given a CNF formula $\varphi$ and a timeout setting $t$, the output of \PartialKC($\varphi$, $t$) is an unbiased estimate of the model count of $\varphi$.
\end{restatable}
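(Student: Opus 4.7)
The plan is to establish the stronger structural statement that for every node $u$ in the partial CCDD maintained by \MicroKC{}, conditional on $u$ having been visited at least once, the random quantity $\widehat{Z}(u)$ defined by Eq.~\eqref{eq:unbiased} satisfies $\mathbb{E}[\widehat{Z}(u)] = Z(\vartheta(u))$ (with the smoothing convention used in Eq.~\eqref{eq:unbiased}). Unbiasedness of the estimate produced by a single segment of \PartialKC{} between two cache clears then follows as the special case $u = \Cache(\varphi)$; combining segments across restarts requires only one additional tower-property argument, while an early exit via line~\ref{line:PartialKC:known} returns the exact count and is trivially unbiased.

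The induction is on the structure of the partial CCDD. The leaves $\Angle{\bot}$, $\Angle{\top}$, and known nodes are immediate. For a decomposed conjunction $u$ with children $w_1,\ldots,w_k$, every visit to $u$ recurses into every $w_j$, and because the $w_j$ have pairwise disjoint variable supports (hence disjoint \Cache{} keys) the random bits defining $\widehat{Z}(w_1),\ldots,\widehat{Z}(w_k)$ are mutually independent given the common visit count $n$; combining independence, the inductive hypothesis, and the definition $c = 2^{(k-1)|X|}$ yields $\mathbb{E}[\widehat{Z}(u)\mid n] = c^{-1}\prod_j Z(\vartheta(w_j)) = Z(\vartheta(u))$. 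For a kernelized conjunction the $|Ch(u)|-1$ prime literal equivalences deterministically halve the smoothed count, so the constant factor $2^{|Ch(u)|-1}$ in Eq.~\eqref{eq:unbiased} matches, and the claim follows from the inductive hypothesis on $ch_{\mathit{core}}(u)$.

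The main obstacle is the decision case, where the visit frequencies $f_0(u),f_1(u)$ are random and \emph{correlated} with the sub-estimates $\widehat{Z}(ch_0(u))$, $\widehat{Z}(ch_1(u))$. Here I would condition on the total visit count $n = f_0(u)+f_1(u)$ to $u$. Because each visit independently samples its branch via $\mathit{Bernoulli}(p_1(u))$ (line~\ref{line:MicroKC:repeat-sample}), $f_b(u)\mid n \sim \mathrm{Binomial}(n, p_b(u))$. The delicate point is that, conditional on $f_b(u)=k$, the sub-DAG at $ch_b(u)$ is produced by $k$ \MicroKC{} invocations whose in-tree randomness is independent of which visits to $u$ happened to route to branch $b$; the inductive hypothesis therefore gives $\mathbb{E}[\widehat{Z}(ch_b(u))\mid f_b(u)=k] = Z(\vartheta(ch_b(u)))$ for every $k\ge 1$, while the $k=0$ term vanishes by the convention $0\cdot\widehat{Z}(\Angle{?}) = 0$ stated just after Eq.~\eqref{eq:unbiased}. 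Applying the tower property together with $\mathbb{E}[f_b(u)\mid n] = n\,p_b(u)$ collapses each summand of the decision clause of Eq.~\eqref{eq:unbiased} to $Z(\vartheta(ch_b(u)))/2$, hence $\mathbb{E}[\widehat{Z}(u)\mid n] = Z(\vartheta(u))$, closing the induction.

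Finally, I would incorporate the restart bookkeeping of Algorithm~\ref{alg:PartialKC}. Unrolling lines~\ref{line:PartialKC:restart-begin}--\ref{line:PartialKC:restart-end} shows that if \MicroKC{} is invoked $N_1,\ldots,N_k$ times in the $k$ segments between cache clears, with segment estimates $\widehat{Z}_1,\ldots,\widehat{Z}_k$, the returned value equals $\sum_i N_i\widehat{Z}_i\big/\sum_i N_i$. Clearing \Cache{} between segments makes segment $i$'s randomness independent of $\{N_j\}_{j\neq i}$, so the structural claim yields $\mathbb{E}[\widehat{Z}_i \mid N_1,\ldots,N_k] = \mathbb{E}[\widehat{Z}_i\mid N_i] = Z(\varphi)$ whenever $N_i\ge 1$, and a last application of the tower property delivers $\mathbb{E}\!\left[\sum_i N_i\widehat{Z}_i\big/\sum_i N_i\right] = Z(\varphi)$, which is the desired unbiasedness.
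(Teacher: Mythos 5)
Your proposal is correct and follows essentially the same route as the paper's proof: induction over the recursive structure of the \MicroKC{} calls with a case analysis on leaf, decomposed, kernelized, and decision nodes, where the decision case hinges on $\mathbb{E}[f_b(u)] = n\,p_b(u)$ cancelling the importance weights in Eq.~\eqref{eq:unbiased}, and restarts are handled by a weighted average across segments. Your treatment of the decision case --- conditioning on the visit counts and applying the tower property rather than asserting outright that $f(u)$ is independent of the children's estimates --- is somewhat more careful than the paper's, but the underlying argument is the same.
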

\begin{proof}
	If \PartialKC{} exceeds memory limits, it just restarts and finally returns the average estimate. Thus, we just need to show when \PartialKC{} does not exceed memory, it outputs an unbiased estimate of the true count. 
	
	Given the input $\varphi$, we denote all of the inputs of recursively calls of \MicroKC{} as a sequence $S = (\varphi_1, \ldots, \varphi_n=\varphi)$ in a bottom-up way in the call of \PartialKC($\varphi$, $t$).
	Let $N$ be the final number of calls of \MicroKC{} on $\varphi$, and let $Z(\varphi_i)$ be the true model count $\varphi_i$.
	
	We first prove the case $N=1$ by induction with the hypothesis that the call of \MicroKC($\psi$) returns an unbiased estimate of the model count of $\psi$ if $|\Vars(\psi)| < |\Vars(\varphi_n)|$. Thus, the calls of \MicroKC{} on $\varphi_1, \ldots, \varphi_{n-1}$ return unbiased estimates of the model counts, and the results are stored in $Cache$.
	We denote the output of \MicroKC($\varphi_i$) by $u_i$.
	The cases when \MicroKC{} returns a leaf node or $\Cache(\varphi_n) = nil$ in line 4 is obvious.
	We proceed by a case analysis (the equations about true counts can be found in the proofs of Propositions 1--2 in \cite{Lai:etal:21}):
	\begin{enumerate}[(a)]
		\item $E \not= \emptyset$ in line 6: The input of the recursive call is $\varphi_{n-1}$. According to the induction hypothesis, $\expectation{}{\widehat{Z}(v)} = Z(\varphi_{n-1})$. Thus, $\expectation{}{\widehat{Z}(u_n)} =  \dfrac{Z(\varphi_{n-1})}{2^{|Ch(u)| - 1}} = Z(\varphi_{n})$ .
		\item $|\Psi| = m > 1$ in line 15: The input of the recursive calls are $\varphi_{n-m}, \ldots, \varphi_{n-1}$. According to the induction hypothesis, $\expectation{}{\widehat{Z}(u_i)} = Z(\varphi_{i})$ ($n-m \le i \le n-1$). Due to the conditional independence, $\expectation{}{\widehat{Z}(u_n)} =  c^{-1} \cdot \prod_{i=n-m}^{n-1}\expectation{}{\widehat{Z}(u_i)} = c^{-1} \cdot \prod_{i=n-m}^{n-1}Z(\varphi_i) = Z(\varphi_{n})$.
		\item $|\Psi| = 1$ in line 15. The input of the recursive call is $\varphi_{n-2} = \varphi_n[x \mapsto false]$ and $\varphi_{n-1} = \varphi_n[x \mapsto true]$. Thus, $Z(\varphi_{n}) = \frac{1}{2} \cdot (Z(\varphi_{n-2}) + Z(\varphi_{n-1}))$. 
		\begin{align*}
			\expectation{}{\widehat{Z}(u)}&=\frac{\expectation{}{\widehat{Z}(ch_0(u))} \cdot f_0(u)}{2p_0(u) \cdot (f_0(u) + f_1(u))} \cdot p_0(u)+\\
			&\mathrel{\phantom{=}}\frac{\expectation{}{\widehat{Z}(ch_1(u))}\cdot f_1(u)}{2p_1(u) \cdot (f_0(u) + f_1(u))} \cdot p_1(u)\\
			&=\frac{1}{2} \cdot \left(\expectation{}{\widehat{Z}(ch_0(u))} + \expectation{}{\widehat{Z}(ch_1(u))}\right)\\
			&=Z(\varphi_{n}).
		\end{align*}
	\end{enumerate}
	
	We can also prove the case $N > 1$ by induction.
	We call \MicroKC{} on the same formula at most $N$ times; in other words, each formula in $S$ at most $N$ times.
	It is assumed that the $M$-th call of \MicroKC($\psi$) returns an unbiased estimate of the model count of $\psi$ if $|\Vars(\psi)| < |\Vars(\varphi_n)|$ or $M < N$. 
	Then the proof for this case is similar to the case $N=1$ except the decision case. 
	We remark that for a decision node without any unknown node, we can compute its exact count by applying Eq. \eqref{eq:bound}.
	When it returns a known node in lines \ref{line:Micro:repeat-known-begin}--\ref{line:Micro:repeat-known-end}, we can get the exact count.
	Now we prove the case when a non-unknown node is returned.
	For convenience, we denote the returned node by $u$.
	The value of $f(u)$ is independent from what are the children of $u$. 
	We have $\expectation{}{\frac{f_0(u)}{N}}=p_0(u)$ and $\expectation{}{\frac{f_1(u)}{N}}=p_1(u)$. From the decision case of Eq. \eqref{eq:unbiased}, we get the following equation:
	\begin{align*}
		\expectation{}{\widehat{Z}(u)}&=\frac{\expectation{}{\widehat{Z}(ch_0(u))} \cdot \expectation{}{f_0(u)}}{2p_0(u) \cdot N} +\\
		&\mathrel{\phantom{=}}\frac{\expectation{}{\widehat{Z}(ch_1(u))}\cdot \expectation{}{f_1(v)}}{2p_1(u) \cdot N}+\\
		&=\frac{1}{2} \cdot \left(\expectation{}{\widehat{Z}(ch_0(u))} + \expectation{}{\widehat{Z}(ch_1(u))}\right)\\
		&=Z(\varphi_{n}).
	\end{align*}
\end{proof}

As we get an unbiased estimate of the exact count,
probabilistic lower bounds can be obtained by Markov's inequality \cite{Wei:Selman:05,Gomes:etal:07}.

\subsection{Implementation}
\label{sec:imp}

We implemented {\PartialKC} in the toolbox KCBox.\footnote{KCBox is available at: \url{https://github.com/meelgroup/KCBox}}
For the details of functions \Decompose{}, \PickGoodVar{}, \Kernelizable{}, \ConstructCore{} and ExactMC, we refer the reader to ExactMC \cite{Lai:etal:21}.
In the function \EasyInstance{}, we rely on the number of variables as a proxy for the hardness of a formula, in particular at each level of recursion, we classify a formula $\varphi$ to be easy if  $|\Vars(\varphi)| \le \mathit{easy\_bound}$. We define $\mathit{easy\_bound}$ as the minimum of 512 and $\mathit{easy\_param}$:
$$\mathit{easy\_param} = \begin{cases}
\frac{3}{4} \cdot \#\mathit{NonUnitVars} & \mathit{width} \le 32\\
\frac{2}{3} \cdot \#\mathit{NonUnitVars} & 32 < \mathit{width} \le 64\\
\frac{1}{2} \cdot \#\mathit{NonUnitVars} & \mathit{otherwise}
\end{cases}$$
where $\#\mathit{NonUnitVars}$ is the number of variables appearing in the non-unit clauses of the original formula, and $width$ is the minfill treewidth \cite{book:Darwiche:09}.

\MicroKC{} can be seen as a sampling procedure equipped with KC technologies, and the variance of the estimated count depends on three main factors.
First, the variance depends on the quality of predicting marginal probability.
Second, the variance of a single calling of \MicroKC{} depends on the number of sampling Boolean values in lines \ref{line:MicroKC:initial-sample} and \ref{line:MicroKC:repeat-sample}.
The fewer the samples from the Bernoulli distributions, the smaller the variance.
Finally, the variance depends on the number of \MicroKC{} calls when fixing their total time.
We sketch four KC technologies
for reducing the variance.

The first technology is how to implement \MargProb{}.
Without consideration of the dynamic decomposition, each call of \MicroKC{} can be seen as a process of importance sampling, where the resulting partial CCDD is treated as the proposal distribution.
Similar to importance sampling, it is easy to see that the variance of using \PartialKC{} to estimate model count depends on the quality of estimating the marginal probability in line \ref{line:MicroKC:MargProb}.
If the estimated marginal probability equals the true one, \PartialKC{} will yield an optimal (zero variance) estimate.
In principle, the exact marginal probability can be compute from an equivalent full CCDD.
This full compilation is however impractical computationally.
Rather, \MargProb{} estimates the marginal probability via compiling the formula into full CCDDs on a small set $P$ of projected variables by $\ProjectedKC$ in Algorithm \ref{alg:ProjectedKC}.
In detail, we first perform two projected compilations by calling	$\ProjectedKC(\varphi[x \mapsto false], P)$ and 
$\ProjectedKC(\varphi[x \mapsto true], P)$ with the outputs $u$ and $v$, and then use the compiled results to compute the marginal probability that is equal to $\frac{Z(v)}{Z(u)+Z(v)}$.

\begin{algorithm}[tb]
	\caption{\ProjectedKC($\varphi$, $P$)} \label{alg:ProjectedKC}
	\lIf {$\varphi = \mathit{false}$} {\KwRet $\Angle{\bot}$} \label{line:ProjectedKC:base-case-begin}
	\lIf {$\varphi = \mathit{true}$} {\KwRet $\Angle{\top}$} \label{line:ProjectedKC:base-case-end}
	\lIf {$\mathit{ProjCache}(\varphi) \not= nil$} {\KwRet $\mathit{ProjCache}(\varphi)$}
	$\Psi \leftarrow \Decompose(\varphi)$\; \label{line:ProjectedKC:decompose}
	\uIf {$|\Psi| > 1$} {
		$W \leftarrow \{\ProjectedKC(\psi, P) \mid \psi \in \Psi\}$\;
		\KwRet $\mathit{ProjCache}(\varphi) \leftarrow \Angle{\land_d, W}$\label{line:ProjectedKC:decomposition}
	}
	\Else{ 
		\If{ $\Vars(\varphi) \cap P = \emptyset$  }{
			\lIf {$\varphi$ is satisfiable} {\KwRet $\Angle{\top}$}
			\lElse {\KwRet $\Angle{\bot}$} 
		}
		$x \leftarrow \PickGoodVar(\varphi, P)$\; \label{line:ProjectedKC:decision-begin}
		$w_0 \leftarrow \ProjectedKC(\varphi[x \mapsto false], P)$\;
		$w_1 \leftarrow \ProjectedKC(\varphi[x \mapsto true], P)$\; 
		\KwRet $\mathit{ProjCache}(\varphi) \leftarrow \Angle{x, w_0, w_1}$\; \label{line:ProjectedKC:decision-end}
	}
\end{algorithm}

The second technology is dynamic decomposition in line \ref{line:MicroKC:decompose}.
We employ a SAT solver to compute the implied literals of a formula, and use these implied literals to simplify the formula.
Then we decompose the residual formula according to the corresponding primal graph.
We can reduce the variance based on the following:
\begin{inparaenum}[(a)]
	\item the sampling is backtrack-free, this remedies the rejection problem of sampling; 
	\item we reduce the variance by sampling from a subset of the variables, also known as Rao-Blackwellization \cite{Casella:Robert:96}, and its effect is strengthened by decomposition;
	\item after decomposing, more virtual samples can be provided in contrast to the original samples \cite{Gogate:Dechter:12}.
\end{inparaenum}

The third technology is kernelization, which can simplify a CNF formula. After kernelization, we can reduce the number of sampling Boolean values in lines \ref{line:MicroKC:initial-sample} and \ref{line:MicroKC:repeat-sample}.
It can also save time of for computing implied literals in the dynamic decomposition as kernelization often can simplify the formula.

The fourth technology is the component caching implemented using hash table $\Cache$.
In different calls of \MicroKC{}, the same sub-formula may need to be processed several times.
Component caching, can save the time of dynamic decomposition, and accelerate the sampling.
It can also reduce the variance by merging the calls to \MicroKC{} on the same sub-formula.
Consider Example \ref{exam:PartialKC} again. We call \MicroKC{} on $\varphi_{2}$ twice, and obtain a known node.
The corresponding variance is then smaller than that of a single call of \MicroKC{}.
Our implementation uses the component caching scheme in sharpSAT~\cite{sharpSAT}.

\begin{table*}[htb]
	\centering
	\footnotesize
	\aboverulesep = 0.2ex
	\belowrulesep = 0.2ex
	\begin{tabularx}{\linewidth}{l>{\centering\arraybackslash}*{11}{>{\centering\arraybackslash}X}}\toprule
		\multirow{3}*{domain (\#, \#known)} & \multicolumn{3}{c}{exact} & \multicolumn{8}{c}{approximate} \\\cmidrule(rl){2-4}\cmidrule(rl){5-12}
		{} & \multirow{2}*{D4} & \multirow{2}*{\hspace{-0.58cm} sharpSAT-td} & \multirow{2}*{\hspace{-0.05cm}ExactMC} & \multicolumn{3}{c}{\PartialKC{}} & \multicolumn{2}{c}{satss} & \multicolumn{2}{c}{STS} & \multirow{2}*{\hspace{-0.3cm}ApproxMC} \\\cmidrule(rl){5-7}\cmidrule(rl){8-9}\cmidrule(rl){10-11}
		{} & {} & {} & {} & \#approx & \#conv & $\varepsilon=4$ & \#approx & {$\varepsilon=4$} & \#approx & {$\varepsilon=4$} \\\midrule
		Bayesian-Networks (201, 186)  & 179 & 186 & 186 & \textbf{195} & 186 & \textit{186} & 18 & 17 & 157 & 148 & 172 \\
		BlastedSMT (200, 183) & 162 & 163 & 169 & \textbf{200} & 168 & 177 & 150 & 129 & \textbf{200} & 178 & \textit{197} \\
		Circuit (56, 51) & 50 & 50 & 51 & \textbf{54} & 50 & \textit{50} & 15 & 13 & 50 & 44 & 46 \\
		Configuration (35, 35) & 34 & 32 & 31 & \textbf{35} & 29 & \textit{33} & 33 & 31 & 35 & 13 & 15 \\
		Inductive-Inference (41, 23) & 18 & 21 & 22 & 37 & 21 & \textit{23} & 40 & 19 & \textbf{41} & \textit{23} & 21 \\
		Model-Checking (78, 78) & 75 & \textbf{78} & \textbf{78} & \textbf{78} & 77 & \textit{78} & 11 & 7 & 10 & 8 & \textbf{\textit{78}} \\
		Planning (243, 219) & 208 & 215 & 213 & \textbf{240} & 209 & \textit{213} & 169 & 126 & 238 & 103 & 147 \\
		Program-Synthesis (220, 119) & 91 & 78 & 108 & \textbf{135} & 100 & 113 & 81 & 64 & 125 & 108 & \textit{115} \\
		QIF (40, 33) & 29 & 30 & 32 & \textbf{40} & 24 & 32 & 15 & 13 & 27 & 25 & \textbf{\textit{40}} \\
		MC2022\_public (100, 85) & 72 & 76 & 76 & \textbf{89} & 69 & \textit{79} & 55 & 45 & 88 & 77 & 61 \\\midrule[0.02em]
		Total (1214, 1012) & 918 & 929 & 966 & \textbf{1103} & 933 & \textit{984} & 585 & 463 & 971 & 727 & 892 \\\bottomrule
	\end{tabularx}
	\caption{Comparative counting performance between D4, sharpSAT-td, ExactMC, \PartialKC{}, satss, STS, and ApproxMC. Each cell below D4, sharpSAT-td, ExactMC, \#approx of \PartialKC{}, satss, STS, and the ApproxMC column refers to the number of solved instances, and the maximum numbers are marked in bold. \#known denotes the number of instances that solved by D4, sharpSAT-td, or ExactMC with a longer timeout of four hours. \#conv refers to the number of instances where convergence was reached. $\varepsilon=4$ refers to the number of instances where the reported count falls into $[(1 + \varepsilon)^{-1}Z, (1 + \varepsilon)Z]$ with the true count $Z$. We remark that each count reported by ApproxMC falled into this interval. The maximum numbers of estimates falling into the interval are marked in italics.}
	\label{tab:expri:counting}
\end{table*}

\begin{figure*}[tb]
	\centering
	\subfloat[inductive-inference\_ii32c2]{\label{fig:convergence:a}
		\centering
		\begin{minipage}[c]{0.33\linewidth}
			\centering
			\includegraphics[width = \textwidth]{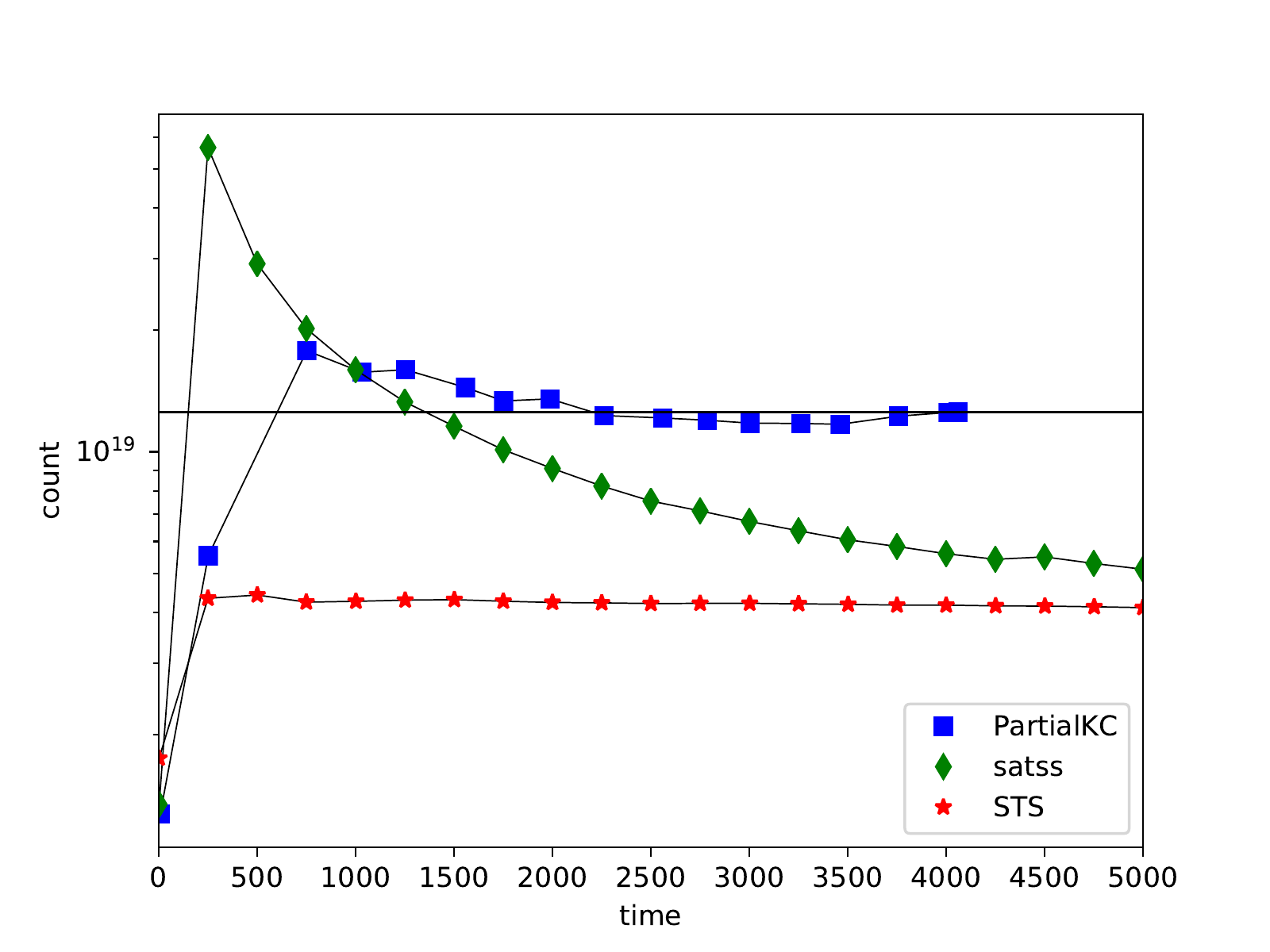}
		\end{minipage}
	}
	\subfloat[ModelChecking\_bmc-ibm-7]{\label{fig:convergence:b}
		\centering
		\begin{minipage}[c]{0.33\linewidth}
			\centering
			\includegraphics[width = \textwidth]{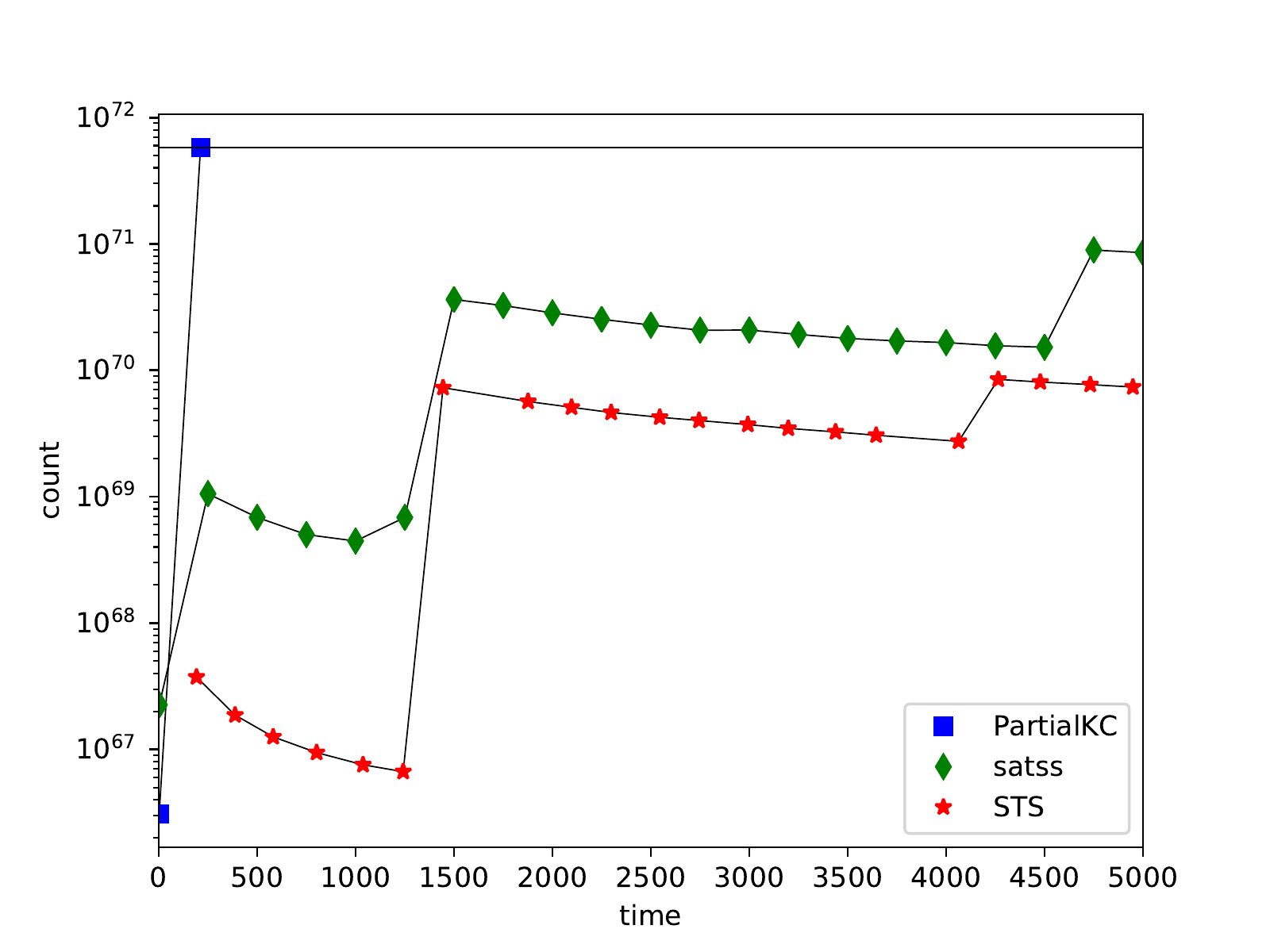}
		\end{minipage}
	}
	\subfloat[Planning\_blocks\_right\_3\_p\_t10]{\label{fig:convergence:c}
		\centering
		\begin{minipage}[c]{0.33\linewidth}
			\centering
			\includegraphics[width = \textwidth]{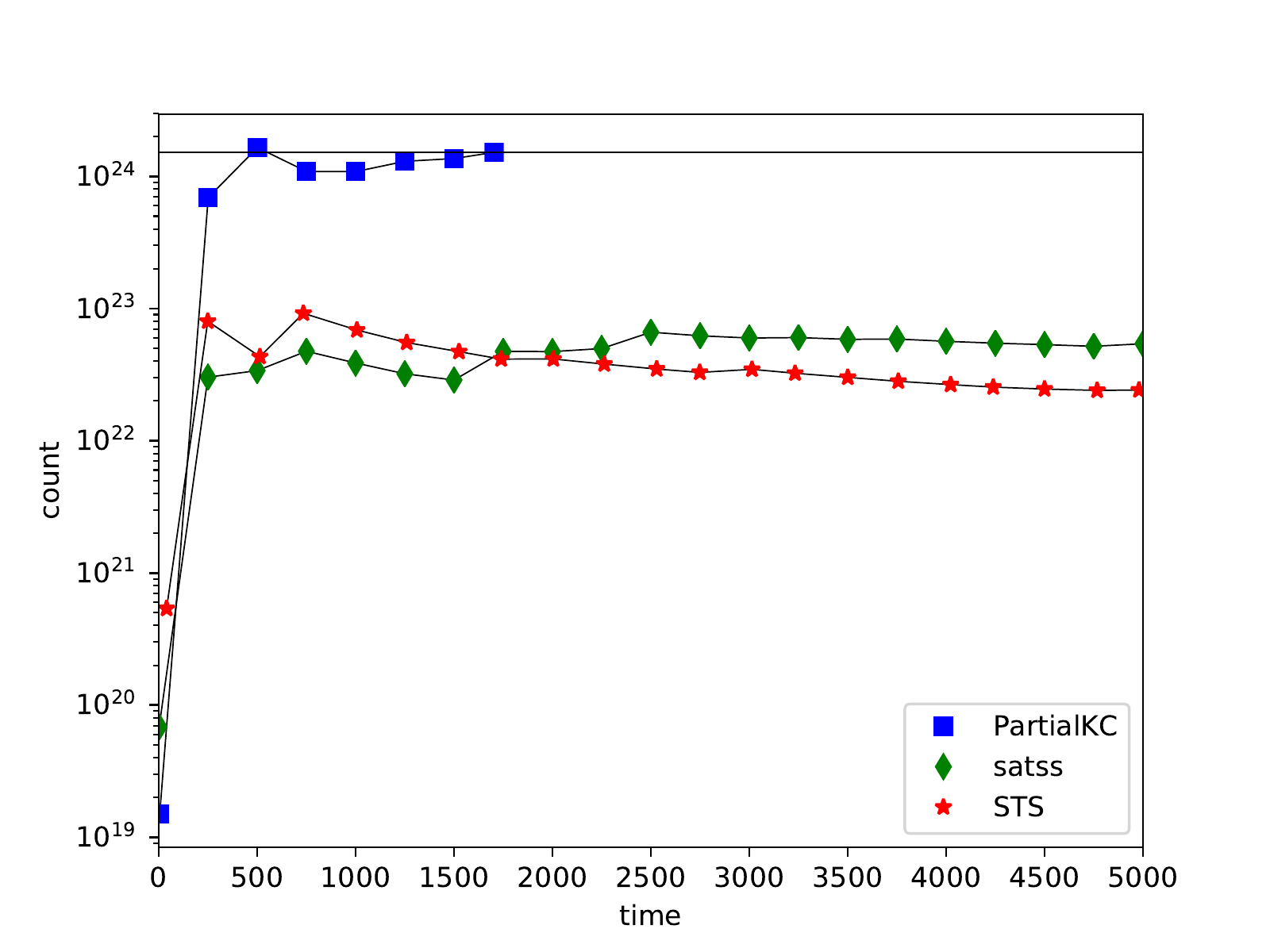}
		\end{minipage}
	}
	\caption{The convergence performance of \PartialKC{}, SampleSearch and STS over three instances with true counts depicted by straight horizontal lines. (Best viewed in color)}\label{fig:convergence}
\end{figure*}
\section{Experiments}
\label{sec:experiments}

We evaluated {\PartialKC}
on a comprehensive set of benchmarks: (i) 1114 benchmarks from a wide range of application areas, including automated planning, Bayesian networks, configuration, combinatorial circuits, inductive inference, model checking, program synthesis, and quantitative information flow (QIF) analysis; and (ii) 100 public instances adopted in the Model Counting Competition 2022. We remark that the 1114 instances have been employed in the past to evaluate model counting and knowledge compilation techniques~\cite{D4,Lai:etal:17,Fremont:etal:17,ApproxMC3,Lai:etal:21}. 
The experiments were run on a cluster (HPC cluster with job queue)
where each node has 2xE5-2690v3 CPUs with 24 cores and 96GB of RAM. 
Each instance was run on a single core with a timeout of 5000 seconds and 8GB memory.

We compared exact counters D4 \cite{D4}, sharpSAT-td \cite{SharpSAT-TD}, and ExactMC \cite{Lai:etal:21}, and approximate counters satss (SampleSearch, \cite{Gogate:Dechter:11}), STS (SearchTreeSampler, \cite{STS}), and the 
latest version (from the developers)
of ApproxMC \cite{ApproxMC,Arjun} that combines the independent support computation technique, Arjun, with ApproxMC. 
We remark that both satss and STS are anytime.
ApproxMC was run with $\varepsilon=4$ and $\delta = 0.2$.
We used the pre-processing tool B+E \cite{BplusE} for all the instances, which was shown very powerful in model counting.
Consistent with recent studies, we excluded the pre-processing time from the solving time for each tool as pre-processed instances were used on all solvers.

Table \ref{tab:expri:counting} shows the performance of the seven counters.
The results show that \PartialKC{} has the best scalability as it can solve approximately many more instances than the other six tools.
We emphasize that there are 123 instances in 1103 that was not solved by D4, sharpSAT-td, and ExactMC.
The results also show that \PartialKC{} can get convergence on 933 instances, i.e., it gets the exact counts for those instances.
It is surprising that \PartialKC{}, due to the anytime and sampling nature of the algorithm which entails some additional costs, can still outperform state-of-the-art exact counters D4 and sharpSAT-td.

We evaluate the accuracy of \PartialKC{} from two aspects.
First, we consider each estimated count that falls into the interval $[(1 + \varepsilon)^{-1}Z, (1 + \varepsilon)Z]$ of the true count $Z$. 
We say an estimate is \emph{qualified} if it falls into this interval.
Note that this is only for the instances where the true count is known.
We choose $\varepsilon=4$. 
If the estimate falls into $[0.2Z, 5Z]$, it is basically the same order of magnitude as the true count.
The results show that \PartialKC{} computed the most qualified estimates.
We highlight that there are 835 instances where PartialKC converged in one minute of CPU time. This number is much greater than the numbers of qualified solved instances of satss and STS.
We remark that convergence is stricter than the requirement that an estimate falls in $[(1 + \varepsilon)^{-1}Z, (1 + \varepsilon)Z]$.
Second, we compare the accuracy of \PartialKC{}, satss, and STS in terms of the average log relative error \cite{Gogate:Dechter:12}. 
Given an estimated count $\widehat{Z}$, the log-relative error is defined as $\left|\frac{\log(Z)-\log(\widehat{Z})}{\log(Z)}\right|$.
For fairness, we only consider the instances that are approximately solved by all of \PartialKC{}, satss, and STS.
Our results show that the average log relative errors of \PartialKC{}, satss, and STS are 0.0075, 0.0081, and 0.0677, respectively; that is, \PartialKC{} has the best accuracy.

The convergence performance of \PartialKC{} shows that it can give very accurate estimates. 
For instance, if we evaluate with $\varepsilon=0.1$ rather than $\varepsilon=4$
in Table \ref{tab:expri:counting}, \PartialKC{} still can get qualified estimates on 940 (44 fewer) instances, while the two other anytime tools satss and STS give qualified estimates on 337 (126 fewer) and 575 (157 fewer) instances, respectively.
We compare the convergence performance of \PartialKC{}, satss, and STS further on three selected instances depicted in Figure \ref{fig:convergence}.
PartialKC gets fast convergence to the exact count, while neither satss nor STS converges.

\section{Conclusion}\label{sec:conclusion}

Model counting is intrinsically hard, hence, approximate techniques have been
developed to scale beyond what exact counters can do.
We propose a new approximate counter, \PartialKC{}, based on our partial CCDD KC form.
It is anytime and able to converge to the exact count. 
We present many techniques exploiting partial CCDD to achieve better scalability.
Our experimental results show that \PartialKC{} is more accurate than existing anytime approximate counters satss and STS, and scales better.
Surprisingly, \PartialKC{} is able to outperform recent state-of-art exact counters by reaching convergence on many instances.

\section*{Acknowledgments}
We are grateful to the anonymous reviewers for their constructive feedback.
We thank Mate Soos and Stefano Ermon for providing their tools. 
This work was supported in part by the National Research Foundation Singapore under its NRF Fellowship Programme [NRF-NRFFAI1-2019-0004] and the AI Singapore Programme [AISG-RP-2018-005], NUS ODPRT9 Grant [R-252-000-685-13], Jilin Province Natural Science Foundation [20190103005JH] and National Natural Science Foundation of China [61976050]. 
The computational resources were provided by
the National Supercomputing Centre, Singapore (\url{https://www.nscc.sg}).

\bibliography{papers,software,books}

\end{document}